\newcommand{\state}{{\bm{s}}}
\newcommand{\action}{{\bm{a}}}
\newcommand{\obs}{{\bm{x}}}
\newcommand{\rew}{{\bm{r}}}
\newcommand{\policy}{{\pi}}
\newcommand{\dynamics}{{\mathcal{T}}}
\newcommand{\mdp}{\mathcal{M}}
\newcommand{\mdphat}{{\widehat{\mathcal{M}}}}
\title[Imitation learning with conservative world models]{Efficient Imitation Learning with Conservative World Models}
\author{%
 \Name{Victor Kolev$^*$} \Email{vkolev@cs.stanford.edu}\\
 \addr Stanford University
 \AND
 \Name{Rafael Rafailov$^*$} \Email{rafailov@cs.stanford.edu}\\
 \addr Stanford University
 \AND
 \Name{Kyle Hatch} \Email{kyle.hatch@tri.global}\\
 \addr Toyota Research Institute
 \AND
 \Name{Jiajun Wu} \Email{jiajunwu@cs.stanford.edu}\\
 \addr Stanford University
 \AND
 \Name{Chelsea Finn} \Email{cbfinn@cs.stanford.edu}\\
 \addr Stanford University
}
\begin{document}

\maketitle
\def\thefootnote{*}\footnotetext{Equal contribution.}%
\def\thefootnote{\arabic{footnote}}

\begin{abstract}%

We tackle the problem of policy learning from expert demonstrations without a reward function. A central challenge in this space is that these policies fail upon deployment due to issues of distributional shift, environment stochasticity, or compounding errors. Adversarial imitation learning alleviates this issue but requires additional on-policy training samples for stability, which presents a challenge in realistic domains due to inefficient learning and high sample complexity. One approach to this issue is to learn a world model of the environment, and use synthetic data for policy training. While successful in prior works, we argue that this is sub-optimal due to additional distribution shifts between the learned model and the real environment. Instead, we re-frame imitation learning as a fine-tuning problem, rather than a pure reinforcement learning one. Drawing theoretical connections to offline RL and fine-tuning algorithms, we argue that standard online world model algorithms are not well suited to the imitation learning problem. We derive a principled conservative optimization bound and demonstrate empirically that it leads to improved performance on two very challenging manipulation environments from high-dimensional raw pixel observations. We set a new state-of-the-art performance on the Franka Kitchen environment from images, requiring only 10 demos on no reward labels, as well as solving a complex dexterity manipulation task. 

\end{abstract}

\begin{keywords}%
  Imitation learning, Model-based learning, World models, Fine-tuning
\end{keywords}

\section{Introduction}

Learning by demonstration is a natural way for agents to learn complex behaviors. A small set of demonstrations is often easy to obtain via tele-operation from a human expert, while dense reward functions are non-trivial to design, and sparse rewards are challenging for reinforcement learning methods. Demonstrations further alleviate the need for exploration, as they forego both the search problem and the exploration-exploitation trade-off, and instead reduce the task to distribution matching. The enduring approach to learning from demonstrations is Behavior Cloning (BC) \citep{pomerleau1988alvinn}, which, however, suffers from compounding policy errors, instability due to environment stochasticity, and covariate shift \citep{Dagger2011Ross}. Ways to alleviate these shortcomings involve more expert data -- wider expert coverage \citep{Feedback2021Spencer} or directly querying the expert \citep{Dagger2011Ross}. 


Alternative approaches to BC are inverse RL (IRL) \citep{finn2016guided, AIRLFu2018} and adversarial imitation learning (AIL) \citep{GAIL2016Ho, GAIL201Finn}. Instead of reducing the problem to supervised learning, these algorithms use reinforcement learning with online interactions to match the long-term visitation distribution of the expert, and agents learn to self-correct when they deviate from the expert support. In particular, AIL formulates this problem as a GAN \citep{Goodfellow2014GenerativeAN}: a discriminator is trained to distinguish between the expert trajectories and those produced by the policy. The policy acts as a generator, producing rollouts from the environment, and is optimized with RL to maximize likelihood produced by the discriminator that the rollout is from the expert distribution. Both model-free \citep{GAIL2016Ho, DAC2019Kostrikov, SAM2019Blonde} and model-based approaches \citep{baram2016modelbased, rafailov2021visual} to the RL optimization problem have been developed. In general, these methods deploy existing online policy optimization algorithms with the discriminator-based reward-learning framework described above.


A crucial difference between the RL setting, for which these online policy optimization algorithms have been developed, and imitation learning is the role of exploration. We argue that RL methods carry out excessive exploration when used in the imitation learning setting, and are thus sub-optimal. As the expert distribution is already empirically known and the goal of the agent is to match that distribution, we argue that this setting is much closer to offline pre-training with online fine-tuning, where the agents learns from offline data and fine-tunes with few online interactions. Moreover, this argument is supported by a recent imitation learning method \citep{garg2021iq}, which draws connections between IL and conservative Q-learning \citep{kumar2020conservative}, an offline RL method.


In this work, we develop a conservative model-based policy optimization algorithm for adversarial imitation learning from pixel observations, which we dub \emph{CMIL}, drawing on ideas from offline RL. We argue that conservatism is inherently well-suited to the imitation learning problem. We justify theoretically that a model-based conservative algorithm is appropriate, and show significant improvements on efficiency and stability over prior imitation learning algorithms. We evaluate our method on a set of challenging environments featuring long horizon manipulation and complex dexterity (fig. \ref{fig:envs}). We further show direct empirical evidence from the environments above for our theoretical bounds and approximations. 

We can summarize our contributions as follows: (i) we re-frame imitation learning as offline pre-training with online fine-tuning; (ii) we augment adversarial imitation learning with conservatism, which improves performance; (iii) we present a theoretical derivation of imitation learning as conservative model-based policy optimization; (iv) we provide a framework for reward-free fine-tuning of world model agents.

\section{Related Work}\label{sec:related_work}
Our work sits at the intersection of imitation learning, high-dimensional model-based RL, and offline RL. We will review relevant works from these fields below. 

\paragraph{World models and RL}
Variational predictive models have demonstrated success in a variety of challenging applications. These works model the high-dimensional observation space as a POMDP and use a recurrent VAE model to jointly learn a compact latent representation and a forward dynamics model. One line of research \citep{hu2022model, hu2021fiery, watter2015embed, zhang2019solar, SLAC2020Lee} utilizes the model for representation purposes only and uses standard RL, control, or imitation on top of it. Others such as \citep{ha2018world, Dreamer2020Hafner, hafner2020mastering, hafner2023dreamerv3} use the learned latent dynamics model to train policies entirely on imagined rollouts. A more recent line of work \citep{hansen2022temporal, hansen2023tdmpc2} learns a dynamics model only using a reward, critic and latent consistency objectives without reconstruction. They then deploy planning-based methods for the actual control part. These line of work was further extended to efficient learning from demonstrations and sparse rewards \citep{hansen2022modem} and deployed on real robot systems \citep{lancaster2023modemv2, feng2023finetuning}.

\paragraph{Conservative MBRL} When learning from prior data model-based algorithms need explicit regularization due to model-hacking issues. Model-based offline RL algorithms~\citep{kidambi2020morel, yu2020mopo, argenson2020model, matsushima2020deployment, swazinna2020overcoming, Rafailov2020LOMPO, yu2021combo} also start by pre-training a dynamics model from prior data. However, many of these approaches \citep{yu2020mopo, yu2021combo, cang2021behavioral} use large off-policy replay buffers, which make them unsuitable for online fine-tuning. Other such as \citep{Kidambi-MOReL-20, matsushima2020deployment} use on-policy optimization, which makes them more suitable for fine-tuning, but also does not allow them to make efficient use of prior data for policy optimization. MoDem \citep{hansen2022modem} uses the core approach of \citep{hansen2022temporal} and makes several modifications to the training pipeline, such as behaviour-cloning pre-training, seed rollouts and prioritized replay to make the fine-tuning approach more efficient. These ideas were further developed in \citep{lancaster2023modemv2, feng2023finetuning} which also consider epistemic uncertainty through a critic ensemble to combat model hacking. Alternatively \citep{rafailov2023moto} builds on the DreamerV2 architecture for the online-fine-tuning. It combined online-model based rollouts with Q learning, policy regularization and ensemble-based epistemic uncertainty. Concurrent work by \citep{gokul2024} also investigate curtailing excessive exploration for inverse reinforcement learning, but focus instead on low-dimensional tasks and use expert resets. 

\paragraph{Model-based imitation learning} Many works have tried combining the benefits of imitation learning and the stable, efficient and predictable nature of model-based learning or planning. A line of works have scaled the approach from \citep{PlanNet2019Hafner} to realistic driving scenarios \citep{hu2021fiery}. In particular \citep{hu2022model} combined model-based representation learning and pure behaviour cloning to achieve state of the art results on the CARLA challenge \citep{Dosovitskiy17}. Other works \citep{baram2016modelbased} use model-based optimization with adversarial imitation learning \citep{GAIL2016Ho}. This approach was also scaled to realistic driving applications in \citep{bronstein2022hierarchical}, which train fully offline, but adds a behaviour cloning regularization term to the policy training. Other works \citep{yue2023clare, demoss2023ditto, zhang2023discriminatorguided} use techniques from inverse RL and offline RL to train fully offline imitation algorithms with world models. Similar to our setting \citep{yin2022planning} uses online model learning in combination with behaviour cloning and planning with a learned discriminator as a cost function and achieves significant improvement in learning efficiency over prior work \citep{rafailov2021visual}.

\section{Preliminaries}\label{section:preliminaries}
In this section we introduce the preliminaries of of variational dynamics models and adversarial imitation learning.

\paragraph{POMDP} We consider the problem setting of learning in partially observed Markov decision processes (POMDPs), which can be described with the tuple: $\mdp = (\mathcal{S}, \mathcal{A}, \mathcal{X}, \mathcal{R}, \dynamics, \mathcal{U}, \gamma)$, where $\state \in \mathcal{S}$ is the state space, $\action \in\mathcal{A}$ is the action space, $\obs \in\mathcal{X}$ is the observation space and $r = \mathcal{R}(\state,\action)$ is a reward function. The state transition dynamics is Markovian and given by the distribution $\state' \sim \dynamics(\cdot|\state,\action)$. Finally, the observations are generated through the observation model $\obs \sim \mathcal{U}(\cdot | \state)$. We do not have access to the underlying dynamics, the true state representation of the POMDP, or the reward function. Following the standard inverse RL framework, the agent is provided with a fixed set of expert demonstrations, which we assume are optimal under the unknown reward function. The agent can interact with the environment and learns a policy $\policy(\action_t | \obs_{\leq t})$ that mimics the expert.

\paragraph{Variational Dynamics Models}
A common approach in model-based learning for POMDPs is to use a variational recurrent state-space model (RSSM) jointly model the belief space and the (latent) transition dynamics as $\dynamics_\theta$ \citep{PlanNet2019Hafner, Dreamer2020Hafner}. The RSSM is optimized via the standard ELBO bound:
\begin{align}\label{eq:model_eq}
\mathbb{E}_{q_{\theta}}\Big[\sum_{t=1}^{\tau} \underbrace{ \log p_{\theta}(\obs_t|\state_t)}_{\text{reconstruction}} - \underbrace{\mathbb{D}_{KL}(q_{\theta}(\state_{t}|\obs_{t}, \state_{t-1}, \action_{t-1})||\dynamics_{\theta}(\state_{t}|\state_{t-1}, \action_{t-1}))}_{\text{latent forward model}}\Big]
\end{align}
where both the inference distribution $q_{\theta}$ and the latent dynamics model $\dynamics_{\theta}$ are Gaussian distributions parameterized by neural networks. Following \citep{rafailov2020offline}, we train a latent ensemble of dynamics models $\{\dynamics_{\theta^i}\}_{i=1}^K$ by selecting a different member of the ensemble to evaluate the above loss at every time step in the trajectory.

\paragraph{Imitation learning as divergence minimization}\label{sec:prelim_divergence_min} In line with prior work, we interpret imitation learning as a divergence minimization problem \citep{GAIL2016Ho, DIV2019Sayed, Ke2019ImitationLA}. We begin by analyzing the MDP case. Let
$
\rho^\policy_\mdp(\state, \action) = (1{-}\gamma) \sum_{t=0}^{\infty} \gamma^t \mathbb{P}^\policy_\mdp(\state_t{=}\state, \action_t{=}\action)
$
be the discounted state-action visitation distribution of a policy $\policy$ in MDP $\mdp$. We can notice by marginalizing over all possible states and actions that this is indeed a valid probability distribution. Then, a divergence minimization objective for imitation learning corresponds to
\begin{equation}
    \label{eq:density_objective}
    \min_\policy \ \ \mathbb{D}(\rho^\policy_\mdp, \rho^E_\mdp),
\end{equation}
where $\rho^E_\mdp$ is the discounted visitation distribution of the expert policy $\policy^E$, and $\mathbb{D}$ is a divergence measure between probability distributions. Essentially, we would like to minimize the divergence between the distributions of expert and policy-induced trajectories. To see why this is a reasonable objective, consider the following proposition:

\begin{proposition}\label{prop:TVbound}
Let $V^{\policy}_{\mdp}$ denote the expected return of a policy $\policy$ in $\mdp$. We we can then bound the sub-optimality of any policy $\policy$ as:
$$
\left|V^{\policy}_{\mdp} - V^{\policy'}_{\mdp'}\right| \leq \frac{2R_{\max}}{1-\gamma} \ \mathbb{D}_{TV}(\rho^\policy_\mdp, \rho^{\policy'}_{\mdp'})
$$
where $R_{\max}$ is the maximum reward in the underlying MDP and $\mathbb{D}_{TV}$ is total variation distance.
\end{proposition}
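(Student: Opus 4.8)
The plan is to reduce the value difference to a difference of expectations of the reward under the two visitation distributions, and then invoke the standard inequality relating such differences to total variation distance. Throughout I assume, as is implicit in the statement, that $V^{\policy}_{\mdp}$ and $V^{\policy'}_{\mdp'}$ are computed with respect to the same reward function $\mathcal{R}$ (the two MDPs differ only in their dynamics, which is exactly the true-environment-versus-learned-model case of interest); without this the bound is false.

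First I would establish the occupancy-measure identity
$$
V^{\policy}_{\mdp} \;=\; \frac{1}{1-\gamma}\,\mathbb{E}_{(\state,\action)\sim\rho^{\policy}_{\mdp}}\big[\mathcal{R}(\state,\action)\big].
$$
This is immediate from the definition of $\rho^{\policy}_{\mdp}$: unrolling $V^{\policy}_{\mdp}=\sum_{t\ge 0}\gamma^{t}\,\mathbb{E}\big[\mathcal{R}(\state_t,\action_t)\big]$ and exchanging the (absolutely convergent) sum over $t$ with the expectation over $(\state,\action)$ turns $\sum_{t}\gamma^{t}\mathbb{P}^{\policy}_{\mdp}(\state_t{=}\state,\action_t{=}\action)$ into $\tfrac{1}{1-\gamma}\rho^{\policy}_{\mdp}(\state,\action)$. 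The same identity holds for $\policy'$ in $\mdp'$.

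Next I would write, using this identity twice,
$$
\left|V^{\policy}_{\mdp} - V^{\policy'}_{\mdp'}\right|
= \frac{1}{1-\gamma}\left|\,\mathbb{E}_{\rho^{\policy}_{\mdp}}[\mathcal{R}] - \mathbb{E}_{\rho^{\policy'}_{\mdp'}}[\mathcal{R}]\,\right|,
$$
and then apply the elementary bound $\big|\mathbb{E}_{p}[f]-\mathbb{E}_{q}[f]\big|\le \|f\|_{\infty}\,\|p-q\|_{1} = 2\|f\|_{\infty}\,\mathbb{D}_{TV}(p,q)$, which follows from Hölder's inequality together with the convention $\mathbb{D}_{TV}(p,q)=\tfrac12\|p-q\|_{1}$. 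Taking $f=\mathcal{R}$, $p=\rho^{\policy}_{\mdp}$, $q=\rho^{\policy'}_{\mdp'}$ and $\|\mathcal{R}\|_{\infty}\le R_{\max}$ gives $\big|\mathbb{E}_{\rho^{\policy}_{\mdp}}[\mathcal{R}] - \mathbb{E}_{\rho^{\policy'}_{\mdp'}}[\mathcal{R}]\big|\le 2R_{\max}\,\mathbb{D}_{TV}(\rho^{\policy}_{\mdp},\rho^{\policy'}_{\mdp'})$, and combining with the previous display yields the claim.

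There is no real obstacle here: once the occupancy-measure identity is in place the result is a two-line computation. The only points needing care are bookkeeping — tracking the $(1-\gamma)$ normalization in the definition of $\rho$, being explicit that both values use the same reward, and being consistent about the factor-of-two convention for total variation (a slightly sharper constant $\tfrac{R_{\max}}{1-\gamma}$ is available by first centering $\mathcal{R}$ around $R_{\max}/2$ before applying Hölder, but the stated bound is enough for the downstream argument). The continuous state-action case is identical with sums replaced by integrals.
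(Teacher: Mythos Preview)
Your argument is correct and matches the paper's own derivation essentially line for line: the paper (in its proof of Theorem~\ref{thm:main_theorem}) writes $V^{\policy}_{\mdp}=\tfrac{1}{1-\gamma}\mathbb{E}_{\rho^{\policy}_{\mdp}}[r]$, expresses the difference as an integral of $(\rho^{\policy^E}_{\mdp}-\rho^{\policy}_{\mdphat})\,r$, pulls the absolute value inside, and bounds $|r|\le R_{\max}$ to obtain the $L^1$ distance and hence $2R_{\max}/(1-\gamma)$ times total variation. Your additional remarks about the shared reward assumption and the sharper centered constant are accurate but go slightly beyond what the paper records.
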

By optimizing the objective in Eq. \ref{eq:density_objective}, we directly upper bound the policy under-performance. 

\paragraph{ Adversarial Imitation Learning}

With the divergence minimization viewpoint, we follow the reasoning in \citep{GAIL2016Ho} to optimize the bound in Eq. \ref{eq:density_objective} using the following max-min objective.
\begin{align}
    \label{eq:gail_objective}
    \max_\policy \ \min_{D_\psi} \ & \mathbb{E}_{(\state, \action) \sim \rho_\mdp^E} \left[ - \log D_\psi(\state, \action) \right]  + \mathbb{E}_{(\state, \action) \sim \rho_\mdp^\policy} \left[ - \log \left( 1 - D_\psi(\state, \action) \right) \right]
\end{align}
where $D_\psi$ is a classifier used to distinguish between samples from the expert distribution and the policy generated distribution, also referred to as a ``discriminator''. Results from \citet{Goodfellow2014GenerativeAN} and \citet{GAIL2016Ho} suggest that the learning objective in Eq.~\ref{eq:gail_objective} corresponds to the divergence minimization objective in Eq.~\ref{eq:density_objective} with Jensen-Shannon divergence. This is a valid optimization objective, since the total variation distance is upper bounded by the Jensen-Shanon divergence through Pinsker's inequality. A challenge of this objective is that the second term requires on-policy samples, which is inefficient and hard to scale to high dimensions. Off-policy algorithms have been proposed, but can no longer guarantee that the induced visitation distribution of the learned policy will match that of the expert. 

As an alternative, model-based algorithms can utilize a large number of \emph{imagined} rollouts using the learned dynamics model, with periodic model correction. In addition, learning the dynamics model serves as a rich auxiliary task for state representation learning, making policy learning easier and more sample efficient. Below we justify the model-based approach.

\section{Conservative Model-Based Adversarial Imitation Learning}
\label{sec:algo_mdp}
In this section we will justify our choice of model-based RL algorithms and our main theoretical results. Model-based algorithms for RL and IL involve learning an approximate dynamics model $\widehat{\dynamics}$ using environment interactions. The learned dynamics model can be used to construct an approximate MDP $\mdphat$. In our context of imitation learning, learning a dynamics model allows us to generate on-policy samples from $\mdphat$ as a surrogate for samples from $\mdp$, leading to the objective:
\begin{equation}
    \label{eq:model_based_density_objective}
    \min_\policy \ \ \mathbb{D}(\rho^\policy_{\mdphat}, \rho^E_\mdp),
\end{equation}
which can serve as a good proxy to Eq.~\ref{eq:density_objective}. In other words, we can train the imitation learning algorithm on imagined rollouts inside a learned model, which is trained in an off-policy manner. However, this comes at the cost of potential under-performance, due to model mismatch. This intuition can be captured using the following Theorem. 

\begin{theorem}\label{thm:main_theorem} Let $R_{\max}$ be the maximum of the unknown reward in the MDP with unknown dynamics $\dynamics$. For any policy $\policy$, we can bound the sub-optimality with respect to the expert policy $\policy^E$ as:

\begin{align}\label{eq:TVbound}
    \left|{V^{\policy^E}_{\mdp} - V^{\policy}_{\mdp}}\right| \leq & \frac{2R_{\max}}{1-\gamma} \underbrace{{\mathbb{D}_{TV}(\rho^\policy_{{\mdphat}}, \rho^E_\mdp)}}_{\text{distribution mathcing}}  +  \frac{\gamma \cdot R_{\max}}{(1-\gamma)^2}\underbrace{{\mathbb{E}_{\rho^\pi_{\mdphat}}\left[\mathbb{D}_{TV}(\dynamics(s'|s, a), \widehat{\dynamics}(s'|s, a)\right]}}_{\text{model mismatch}}
\end{align}
\end{theorem}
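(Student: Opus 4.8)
The plan is to split the sub-optimality into two pieces by inserting the value $V^\policy_{\mdphat}$ of the policy $\policy$ evaluated in the \emph{learned} MDP $\mdphat$ (with the same unknown reward), and then bound each piece separately:
\[
\left|V^{\policy^E}_\mdp - V^\policy_\mdp\right| \leq \left|V^{\policy^E}_\mdp - V^\policy_{\mdphat}\right| + \left|V^\policy_{\mdphat} - V^\policy_\mdp\right|.
\]
The first term is a cross-MDP, cross-policy comparison: $\policy^E$ in the true $\mdp$ versus $\policy$ in the learned $\mdphat$. I would apply Proposition~\ref{prop:TVbound} directly with the substitution $(\policy,\mdp)\leftarrow(\policy^E,\mdp)$ and $(\policy',\mdp')\leftarrow(\policy,\mdphat)$, which immediately yields $\left|V^{\policy^E}_\mdp - V^\policy_{\mdphat}\right| \leq \frac{2R_{\max}}{1-\gamma}\,\mathbb{D}_{TV}(\rho^\policy_{\mdphat},\rho^E_\mdp)$ — exactly the distribution-matching term in Eq.~\ref{eq:TVbound}. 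Here it is important that $R_{\max}$ is the bound on the true reward, and that the same reward function is used to evaluate returns in both $\mdp$ and $\mdphat$ (the MDPs differ only in their transition kernels).

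The second term, $\left|V^\policy_{\mdphat} - V^\policy_\mdp\right|$, is the classical simulation-lemma estimate: the same policy run in two MDPs that share reward, initial distribution, and action space but differ in dynamics. I would derive it via a telescoping argument on the Bellman equations for $V^\policy$ under the two transition kernels. Writing $V^\policy_\mdp = r^\policy + \gamma P^\policy_\dynamics V^\policy_\mdp$ and likewise for $\mdphat$, subtracting gives $V^\policy_{\mdphat} - V^\policy_\mdp = \gamma\bigl(P^\policy_{\widehat\dynamics} - P^\policy_\dynamics\bigr)V^\policy_{\mdphat} + \gamma P^\policy_\dynamics\bigl(V^\policy_{\mdphat}-V^\policy_\mdp\bigr)$, so that iterating the contraction $\gamma P^\policy_\dynamics$ yields a $\frac{1}{1-\gamma}$ prefactor and an expectation of $\bigl|(P^\policy_{\widehat\dynamics}-P^\policy_\dynamics)V^\policy_{\mdphat}\bigr|$ taken along the \emph{true} visitation $\rho^\policy_\mdp$. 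Since $\|V^\policy_{\mdphat}\|_\infty \leq \frac{R_{\max}}{1-\gamma}$, a one-step transition-kernel mismatch contributes at most $\frac{2R_{\max}}{1-\gamma}\mathbb{D}_{TV}(\dynamics(\cdot|s,a),\widehat\dynamics(\cdot|s,a))$ at each $(s,a)$, and combined with the $\frac{\gamma}{1-\gamma}$ from the geometric sum this produces the $\frac{\gamma R_{\max}}{(1-\gamma)^2}$ coefficient on an expected TV model error.

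The main obstacle — and the step I would be most careful about — is reconciling the expectation measure in the model-mismatch term: the natural telescoping identity produces the model error averaged over $\rho^\policy_\mdp$ (the true-environment visitation), whereas the theorem states it averaged over $\rho^\policy_{\mdphat}$ (the learned-model visitation). Either the telescoping should be organized the other way (expand $V^\policy_\mdp$ in terms of $\gamma P^\policy_{\widehat\dynamics}$ instead, so the contraction is with respect to $\widehat\dynamics$ and the resulting occupancy is $\rho^\policy_{\mdphat}$), or one absorbs the discrepancy between the two occupancy measures into the already-present distribution-matching term. I would pursue the first route: rewrite $V^\policy_{\mdphat}-V^\policy_\mdp = \gamma P^\policy_{\widehat\dynamics}(V^\policy_{\mdphat}-V^\policy_\mdp) + \gamma(P^\policy_{\widehat\dynamics}-P^\policy_\dynamics)V^\policy_\mdp$ and iterate the $\widehat\dynamics$-contraction, which cleanly gives the model error in expectation under $\rho^\policy_{\mdphat}$ exactly as written. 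A minor secondary point is to make sure the dummy variables match the statement's $(s,a)$ notation and that $V^{\policy'}_{\mdp'}$ in Proposition~\ref{prop:TVbound} is instantiated with the correct reward bound; these are routine once the decomposition above is fixed.
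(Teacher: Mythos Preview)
Your proposal is correct and follows essentially the same route as the paper: the same triangle-inequality split $|V^{\policy^E}_\mdp - V^\policy_\mdp| \leq |V^{\policy^E}_\mdp - V^\policy_{\mdphat}| + |V^\policy_{\mdphat} - V^\policy_\mdp|$, Proposition~\ref{prop:TVbound} for the first term, and a simulation-lemma recursion for the second, organized (as you correctly anticipate) so that the contraction is under $\widehat{\dynamics}$ and the resulting expectation is over $\rho^\policy_{\mdphat}$. The only slip is the constant: your per-step bound $\frac{2R_{\max}}{1-\gamma}\mathbb{D}_{TV}$ times $\frac{\gamma}{1-\gamma}$ gives $\frac{2\gamma R_{\max}}{(1-\gamma)^2}$, not $\frac{\gamma R_{\max}}{(1-\gamma)^2}$; the paper drops this factor of two (equivalently, one can center $V^\policy_\mdp$ before bounding $\bigl|\int(\widehat{\dynamics}-\dynamics)V\bigr|$), so be explicit about that step.
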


\begin{proof} The proof combines several techniques from prior works on imitation learning and offline model-based RL. We begin with the left-hand side

\begin{equation*}
    \left|V^{\policy^E}_{\mdp} - V^{\policy}_{\mdp}\right| \leq \underbrace{\left|V^{\policy^E}_{\mdp} - V^{\policy}_{\mdphat}\right|}_{\text{Term I}} + \underbrace{\left|V^{\policy}_{\mdphat} - V^{\policy}_{\mdp}\right|}_{\text{Term II}}
\end{equation*}
which is a simple application of the triangle inequality. Through Proposition \ref{prop:TVbound} the first term above produces the distribution matching component of the objective in Eq. \ref{eq:TVbound}. The second term yields the model mismatch part of the objective via the "simulation lemma" \citep{Abbeel2005ExplorationAA}:

\begin{align*}
 &\left | V^{\policy}_{\mdphat}(\state_0) - V^{\policy}_{\mdp}(\state_0)\right | = \left|\mathbb{E}_{\policy, \mdphat}\left[r(\state_0, \action_0) + \gamma V^{\policy}_{\mdphat}(\state_1)\right] - \mathbb{E}_{\policy, \mdp}\left[r(\state_0, \action_0) + \gamma V^{\policy}_{\mdp}(\state_1)\right]\right| \\
 &= \gamma\left|\mathbb{E}_{\policy, \mdphat}\left[V^{\policy}_{\mdphat}(\state_1)\right] - \mathbb{E}_{\policy, \mdp}\left[V^{\policy}_{\mdp}(\state_1)\right]\right| \\
 &\leq \gamma\left|\mathbb{E}_{\policy, \mdp}\left[V^{\policy}_{\mdp}(\state_1)\right] - \mathbb{E}_{\policy, \mdphat}\left[V^{\policy}_{\mdp}(\state_1)\right]\right| +
\gamma\left|\mathbb{E}_{\policy, \mdphat}\left[V^{\policy}_{\mdp}(\state_1)\right] - \mathbb{E}_{\policy, \mdphat}\left[V^{\policy}_{\mdphat}(\state_1)\right]\right| \\
&\leq \gamma \frac{R_{\max}}{1-\gamma}\mathbb{E}_{\policy}\left[\mathbb{D}_{TV}(\dynamics(\state_1|\state_0, \action_0), \widehat{\dynamics}(\state_1|\state_0, \action_0))\right] + \gamma\mathbb{E}_{\policy, \mdphat}\left[\left|V^{\policy}_{\mdp}(\state_1)-V^{\policy}_{\mdphat}(\state_1)\right|\right]
\end{align*}

The first equation follows from the definition of the value function, the next inequality is a direct application of the triangle inequality. The final inequality follows from Proposition \ref{prop:TVbound}. We can then recursively apply the same reasoning to the final term to obtain the bound:

\begin{align*}
\left|V^{\policy}_{\mdphat}(\state_0) - V^{\policy}_{\mdp}(\state_0)\right| \leq & \frac{R_{\max}}{1-\gamma}\mathbb{E}_{\policy, \mdphat}\left[ \sum_{t=0}^{\infty}\gamma^{t+1}\mathbb{D}_{TV}(\dynamics(\state_{t+1}|\state_t, \action_t), \widehat{\dynamics}(\state_{t+1}|\state_t, \action_t))\right] = \\ & \frac{\gamma \cdot R_{\max}}{(1-\gamma)^2}\mathbb{E}_{\rho^\pi_{\mdphat}}\left[\mathbb{D}_{TV}(\dynamics(s'|s, a), \widehat{\dynamics}(s'|s, a)\right]
\end{align*}

\end{proof}
That is, if we want to use imagined rollouts for on-policy training of the distribution matching objective, as outlined in Section \ref{sec:prelim_divergence_min}, we need to pay the model mismatch cost in potential under-performance. 
Prior works \citep{Kidambi-MOReL-20, MOPO, rafailov2020offline} consider the model mismatch objective in the context of fully offline model-based reinforcement learning. Usually such approaches are not applied in online scenarios, or can even be flipped and used as exploration algorithms \citep{sekar2020planning} as they explicitly encourage the agent to remain within parts of the state space with high confidence and hinder exploration. However, in the imitation learning case, we can treat the problem as online fine-tuning, since we have explicit access to data from the expert. This makes the problem similar to the fine-tuning setting in that it does not require extensive exploration, where conservative MBRL has show meaningful improvement recently \citep{rafailov2023moto, feng2023finetuning}.

\subsection{Extension to POMDPs}
The results of Section \ref{sec:algo_mdp} were derived under the MDP formulation. In this section we will formulate how they can be translated into the more general POMDP setting, which we explore in this work. Consider the upper bound under Theorem \ref{thm:main_theorem}. Theorem 1 in  \citep{rafailov2021visual} justifies using the belief representation for bounding distribution matching objective in Eq. \ref{eq:TVbound}. Here we will extend this interpretation to justify using latent transition models for the model mismatch objective as well. We formulate this in the below Theorem:

\begin{theorem}
Consider two POMDPs $\mdp$ and $\mdp'$ with identical components, besides potentially different transition models $\dynamics$ and $\dynamics'$. We denote the density over observations at time $t\!+\!1$ as $P_{\mdp}({}\cdot{} |x_{\leq t}, a_{\leq t})$. The, the following result holds

\begin{equation}\label{eq:pomdp_model_mismatch}
\mathbb{D}_f(P_{\mdp}(x_{t+1}|x_{\leq t}, a_{\leq t}), P_{\mdp'}(x_{t+1}|x_{\leq t}, a_{\leq t})) \leq \mathbb{D}_f(\dynamics(s'|s, a), \dynamics'(s'|s, a))
\end{equation}
where $\mathbb{D}_f$ is an arbitrary $f$-divergence.
\begin{proof}
From the data processing inequality \cite{cover1999elements}, and the definition of a POMDP, it follows that
\begin{equation}
    \mathbb{D}_f(P_{\mdp}(x_{t+1}|x_{\leq t}, a_{\leq t}), P_{\mdp'}(x_{t+1}|x_{\leq t}, a_{\leq t})) \leq \mathbb{D}_f(P_{\mdp}(s_{t+1}|x_{\leq t}, a_{\leq t}), P_{\mdp'}(s_{t+1}|x_{\leq t}, a_{\leq t}))
\end{equation}
However, we also have that $P_{\mdp}(s_{t+1}|x_{\leq t}, a_{\leq t}) = \dynamics(s_{t+1}|s_t, a_t)$ and $P_{\mdp'}(s_{t+1}|x_{\leq t}, a_{\leq t}) = \dynamics'(s_{t+1}|s_t, a_t)$ which gives us the final result.
\end{proof}

\end{theorem}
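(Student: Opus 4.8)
The plan is to read the statement as a two-step application of the data-processing inequality (DPI) for $f$-divergences, exploiting that $\mdp$ and $\mdp'$ agree on every component except the latent transition kernel. The background fact I would lean on is the classical DPI: for any Markov kernel $K$ and distributions $P,Q$ on its input space, $\mathbb{D}_f(KP,KQ)\le\mathbb{D}_f(P,Q)$ for every $f$-divergence (see, e.g., \cite{cover1999elements}). First I would isolate the kernels shared by both POMDPs — the emission model $\mathcal{U}(x\mid s)$ and, after conditioning on the current latent state, the one-step transition — and contract through each in turn.

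\emph{Step 1 (peel off the emission layer).} Write the one-step-ahead predictive density over observations as the push-forward of the predictive density over latent states through the shared emission model,
\[
P_{\mdp}(x_{t+1}\mid x_{\le t},a_{\le t}) \;=\; \int \mathcal{U}(x_{t+1}\mid s_{t+1})\,P_{\mdp}(s_{t+1}\mid x_{\le t},a_{\le t})\,ds_{t+1},
\]
and identically for $\mdp'$ with the same $\mathcal{U}$. Applying the DPI with $K=\mathcal{U}$ gives
\[
\mathbb{D}_f\big(P_{\mdp}(x_{t+1}\mid x_{\le t},a_{\le t}),P_{\mdp'}(x_{t+1}\mid x_{\le t},a_{\le t})\big)\;\le\;\mathbb{D}_f\big(P_{\mdp}(s_{t+1}\mid x_{\le t},a_{\le t}),P_{\mdp'}(s_{t+1}\mid x_{\le t},a_{\le t})\big).
\]

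\emph{Step 2 (peel off the history layer).} Then I would condition additionally on the current latent state: by the Markov property $P_{\mdp}(s_{t+1}\mid x_{\le t},a_{\le t},s_t)=\dynamics(s_{t+1}\mid s_t,a_t)$, and likewise with $\dynamics'$ for $\mdp'$. Writing $b_t(s_t)=P(s_t\mid x_{\le t},a_{\le t})$ for the filtering belief and using that $f$-divergences contract under marginalization together with the (exact, when the $s_t$-marginals coincide) chain rule, one obtains $\mathbb{D}_f\big(P_{\mdp}(s_{t+1}\mid x_{\le t},a_{\le t}),P_{\mdp'}(s_{t+1}\mid x_{\le t},a_{\le t})\big)\le\mathbb{E}_{s_t\sim b_t}\big[\mathbb{D}_f(\dynamics(\cdot\mid s_t,a_t),\dynamics'(\cdot\mid s_t,a_t))\big]\le\sup_{s}\mathbb{D}_f(\dynamics(s'\mid s,a),\dynamics'(s'\mid s,a))$. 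Chaining with Step 1 yields the claimed inequality.

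The hard part is the legitimacy of Step 2: the filtering belief $b_t$ is itself propagated through the transition kernel, so it differs between $\mdp$ and $\mdp'$ in general, and the clean chain-rule step only goes through verbatim if one either assumes a shared belief (e.g. conditioning on a common latent trajectory, or working in the regime $\dynamics\approx\dynamics'$ where the beliefs agree to leading order) or reads the right-hand side as a supremum over $(s,a)$ so the belief never has to be pinned down — I would make this reading explicit. The rest — that the DPI, marginalization, and chain-rule identities hold for an arbitrary $f$-divergence, and that emission and marginalization are bona fide Markov kernels — is standard and needs only the usual references.
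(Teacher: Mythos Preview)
Your Step~1 is exactly the paper's first move: apply the data-processing inequality through the shared emission kernel $\mathcal{U}$ to pass from observation-predictives to state-predictives. Where you diverge is Step~2. The paper simply asserts $P_{\mdp}(s_{t+1}\mid x_{\le t},a_{\le t})=\dynamics(s_{t+1}\mid s_t,a_t)$ and stops; you correctly observe that this cannot be literally true, since $s_t$ is not in the conditioning set and the left-hand side is $\int\dynamics(s_{t+1}\mid s_t,a_t)\,b_t^{\mdp}(s_t)\,ds_t$ with a filtering belief $b_t^{\mdp}$ that itself depends on $\dynamics$ and hence differs from $b_t^{\mdp'}$. Your proposed repair --- bound by $\mathbb{E}_{s_t\sim b_t}[\,\cdot\,]$ and then by the supremum over $(s,a)$, reading the unquantified right-hand side of the theorem as a sup --- is the honest way to make the inequality rigorous, and your caveat about needing either a shared belief or the sup reading is precisely the subtlety the paper's proof glosses over. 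So: same skeleton as the paper, but your version is the more careful one.
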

This result, allows us to justify latent model uncertainty in as an upper bound on the 

            

\section{Our Method}
Our full algorithm has several components: (i)  variational dynamics model (as described in sec. \ref{section:preliminaries}), (ii)  state-action discriminator, and (iii) the actor-critic policy optimization. We will discuss all of these in more detail. We train the model, discriminator, actor and critic simultaneously.

\paragraph{Adversarial Formulation} Directly applying Theorem 1 in  \citep{rafailov2021visual}, we can bound the objective of Eq. \ref{eq:TVbound} by optimizing the same bound in the learned models' latent belief space. In more detail, we consider sequences of data of the form $\tau = (\obs_{1:T}, \action_{1:T})$. At each agent training step, we infer latent states $\state_{1:T}^0\sim q_{\theta}(\state_{1:T}|\obs_{1:T}, \action_{1:T})$. We also denote $\action_{1:T}$ as $\action_{1:T}^0$. Using these states as starting points, we use the policy $\policy_{\psi}$ to generate $H$-step rollouts steps with the following notation: $\hat{\action}_j^t\sim \pi_{\psi}(\action|\hat{\state}_j^t)$ and  $\hat{\state}_{j}^{t+1} \sim p_{\theta}(\state|\hat{\action}_j^t, \hat{\state}_j^t)$. Following standard off-policy learning algorithms, we use critics $\{Q_{\psi^i}\}_{i=1}^2$ and and target networks $\{\widebar{Q}_{\psi^i}\}_{i=1}^2$.

We can bound the distribution matching component of Eq. \ref{eq:TVbound} through Pinsker's inequality and follow a standard adversarial approach by training a discriminator \citep{DIV2019Sayed}:
\begin{equation}
    \label{eq:our_objective_mdp}
    \min_{D_\psi} - \frac{1}{T}\mathbb{E}_{\tau\sim\mathcal{D}^E, q_{\theta}}\left[\sum_{i=1}^T\log  D_\psi(\state_i^E, \action_i^E)\right] -\frac{1}{HT}\mathbb{E}_{\dynamics_{\theta}, \pi_{\psi}}\left[\sum_{i=1, t=0}^{T, H-1}\log (1-D_\psi(\state_i^t, \action_i^t)) \right]
\end{equation}
where $\state_{1:T}^E\sim q_{\theta}(\state_{1:T}|\obs_{1:T}^T, \action_{1:T}^E)$ are the expert's inferred latent states and actions.  On-policy samples from the model also give theoretical justification for this discriminator learning objective as the expectation is taken under the current policy. We cannot directly optimize the model mismatch component of Eq. \ref{eq:TVbound}, as we cannot directly estimate divergence factor. Instead, following prior work \citep{MOPO, Kidambi-MOReL-20, rafailov2020offline}, we optimize a surrogate objective based on ensemble model disagreement. In particular, we use
\begin{equation}
    \label{eq:model_error_estimator}
\mathbb{E}_{\rho^\pi_{\mdphat}}\left[\mathbb{D}_{TV}(\dynamics(s'|s, a), \widehat{\dynamics}(s'|s, a)\right]\approx \mathbb{E}_{\pi, \widehat{\dynamics}}[\text{std}(\{\mu_{\theta^i}(\state,\action)\}_{i=1}^K)]
\end{equation}
where $\mu_{\theta^i}(\state,\action)$ is the parameterization of the mean of $i$-th Gaussian latent model ${\dynamics}_{\theta^i}(\cdot|\state, \action)$. The combined final reward for the agent is then:
\begin{equation}\label{eq:learned_reward}
    \rew(\state, \action) = \log D_{\psi}(\state, \action)-\log(1-D_{\psi}(\state, \action)) - \alpha\text{std}(\{\mu_{\theta^i}(\state,\action)\}_{i=1}^K)
\end{equation}

where $\alpha$ is a tunable hyper-parameter. We note that this still a fully differentiable function of the state. We can then label all data and model-sampled latent transitions $\rew_i^t = \rew(\state_i^t, \action_i^t)$. However, we use the uncertainty regularization only on the model-generated rollouts and not on the replay buffer data.


\paragraph{Actor Optimization} Once we have rewards, we can calculate Monte-Carlo based policy returns:
\begin{align*} 
& V_0^{\pi_{\psi}}(\hat{\state}_j^t) = \min_{i=1,2}\{Q_{\psi^i}(\hat{\state}_j^t, \hat{\action}_j^t)\} \nonumber \\
& V_K^{\pi_{\psi}}(\hat{\state}_j^t) = \sum_{k=1}^K \gamma^{k-1}\hat{\rew}_j^{k+t} + \gamma ^{K}V_0^{\pi_{\psi}}(\hat{\state}_j^{t+K}) \nonumber
\end{align*}
And compute the standard TD$(\lambda)$ estimate:
\begin{equation} \label{eq:MC_value}
 V^{\pi_{\psi}}(\hat{\state}_j^t) = (1-\lambda)\sum_{k = 1}^{H-t-1}\lambda^{k-1} V_k^{\pi_{\psi}}(\hat{\state}_j^t) + \lambda^{H-t-1}V_{H-t}^{\pi_{\psi}}(\hat{\state}_j^t)
\end{equation}

We can then train the actor to maximize the  TD$(\lambda)$ estimate as:
\begin{equation}\label{eq:actor_objective}
    \mathcal{L}^{\text{model}}_{\pi_{\psi}} = -\frac{1}{H T}\mathbb{E}_{\dynamics_{\theta}, \pi_{\psi}}\Bigg[\sum_{t=0, j=1}^{H-1, T}\lambda V^{\pi_{\psi}}(\hat{\state}_j^t) + (1-\lambda)V_0^{\pi_{\psi}}(\hat{\state}_j^t)\Bigg].
\end{equation}
Note that this maximizes the expected MC return at both the dataset and rollout states. Moreover, this fully differentiable function of the policy parameters, as we can differentiate through the model and reward function. 

\noindent\textbf{Critic Optimization} We can use MC return estimates to train the critics via a Bellman backup objective. We recompute the critic target values $\widebar{V}^k(\hat{\state}_j^t)$ for all states similarly to Eq. \ref{eq:MC_value} using the target networks $\{\widebar{Q}_{\psi^i}\}_{i=1}^2$. Critics are trained on both the model-generated and real data with two losses:
\begin{equation}\label{model_critic}
    \mathcal{L}^{\text{model}}_{Q_{\psi^i}} = \frac{1}{HT}\mathbb{E}_{\dynamics_{\theta}, \pi_{\psi}}\Bigg[\sum_{t=0, j=1}^{H-1, T}(Q_{\psi^i}(\hat{\state}_j^t, \hat{\action}_j^t) - \widebar{V}^{\pi_{\psi}}(\hat{\state}_j^t))^2 \Bigg]
\end{equation}
\begin{equation}\label{data_critic}
    \mathcal{L}^{\text{data}}_{Q_{\psi^i}} = \frac{1}{T-1}\Bigg[\sum_{j=1}^{T-1}\Big(Q_{\psi^i}(\state_j^0, \action_j^0) -
    (\hat{\rew}_{j+1}^0 + \gamma \big((1-\lambda) \widebar{V}_0^{\pi_{\psi}}(\state_{j+1}^0) + \lambda \widebar{V}^{\pi_{\psi}}(\state_{j+1}^0)\big)\Big)^2 \Bigg]
\end{equation}
The final critic loss is a combination of the two losses:
\begin{equation}\label{final_critic}
    \mathcal{L}_{Q_{\psi^i}} = \mathcal{L}^{\text{model}}_{Q_{\psi^i}} + \mathcal{L}^{\text{data}}_{Q_{\psi^i}}
\end{equation}
The loss $\mathcal{L}^{\text{data}}_{Q_{\psi^i}}$ is computed on transitions sampled form the dataset trajectories through the inference model $q_{\theta}$. Note that the real rollouts do not receive the uncertainty penalty. Training the critics on the available expert demonstrations serves as a strong supervision, and aids in minimizing the shift from the expect-induced state-action distribution. 


\paragraph{Practical Implementation Notes}
We share some notable implementation\footnote{\url{https://www.github.com/victorkolev/cmil}} details that we have employed. We build on the DreamerV2 architecture, keeping all hyperparameters default. Beyond the optimization procedure outlined above, we add a few details to help with training. As in \citep{hansen2022modem}, we initialize the policy with behavior cloning, and then roll out ``seed'' episodes to add to the replay buffer. Additionally, we add behaviour cloning loss with a hyperparameter $\beta$ to the actor loss to regularize actor training \citep{rafailov2023moto}. Crucially, we add Gaussian noise $\mathcal{N}(0, \sigma^2)$ to the state-action input vector in training the discriminator (eq. 3), which serves to regularize training. We found this to be important in training stable and robust discriminator, and hence an integral part of the algorithm's success. We apply the same regularization to the expert and the policy. While we have experimented with different discriminator training regimes (e.g. using Wasserstein distance),  they did not yield a significant improvement in results.

\section{Experiments \& results}

\begin{figure}
    \centering
    \includegraphics[width=0.6\linewidth]{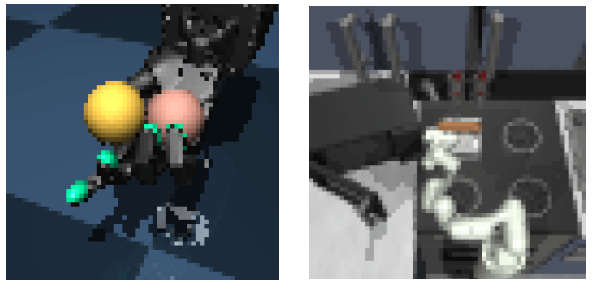}
    \caption{The suite of environments. \textbf{Left}: ShadowHand Baoding Balls.  \textbf{Right}: Franka Kitchen.  }
    \label{fig:envs}
\end{figure}

We devise a diverse experimental setup to evaluate our approach, \emph{CMIL}, on the relevant domains -- dexterous manipulation (ShadowHand Baoding balls) and long-horizon planning (Franka Kitchen). The first environment consists of manipulating a 20-DoF 24-joint Shadow Hand robot that has to revolve two Baoding balls, and in the latter environment the objective of the task is to use a 7-DoF Franka Emica Panda robot to manipulate four objects in the kitchen in a certain order (open the microwave, move the kettle, turn on the light, open a cabinet); this happens over $\approx 250$ time steps, and hence demands long-horizon planning, in addition to learning dexterity. We focus on the high-dimensional learning setting and use only pixel observations for both environments. Note that we have access to action information of the expert demonstrations, i.e. we are targeting the setting where we can extract actions, but it is non-trivial to define a complete state and form an MDP. We use 10-20 expert demonstrations for each task, which are collected from expert policies\footnote{Demonstration sets are published along with code}. We use the reward functions only as oracles for evaluation, but not at any stage during training. 

In the experimental results, we will be looking at performance measured by success rate and sample efficiency of learning. We choose success rate as a metric instead of total reward to be faithful to the reward-free setting, and also because it is more representative of the agent’s success in learning the expert distribution. We run three random seeds and present training curves with 95\% bootstrap confidence intervals. 


\paragraph{Baselines}
We compare CMIL with three baselines: (i) standard behavior cloning, (ii) P-DAC, an AIL model-free algorithm designed to work from pixel observations, and (iii) VMAIL, a model-based imitation learning algorithm, which does not use conservatism. We also attempt to benchmark with IQL \citep{garg2021iq}, but could not stabilize the algorithm for continuous control from pixel observations. With this set of baselines, we cover both standard approaches (BC),  and leading model-free and model-based imitation learning algorithms. We use default hyperparameters for the baselines.

\paragraph{Results: performance \& sample efficiency}
Training curves are presented in figure \ref{fig:results}. We see CMIL out-performing baselines in both environments, notably achieving close to expert-level performance with few environment interactions (250K steps), demonstrating improved sample efficiency to previous methods which are typically trained on the order of 1M steps.

\begin{figure}
    \centering
    \begin{minipage}{0.48\textwidth}
        \centering
        \includegraphics[width=\textwidth]{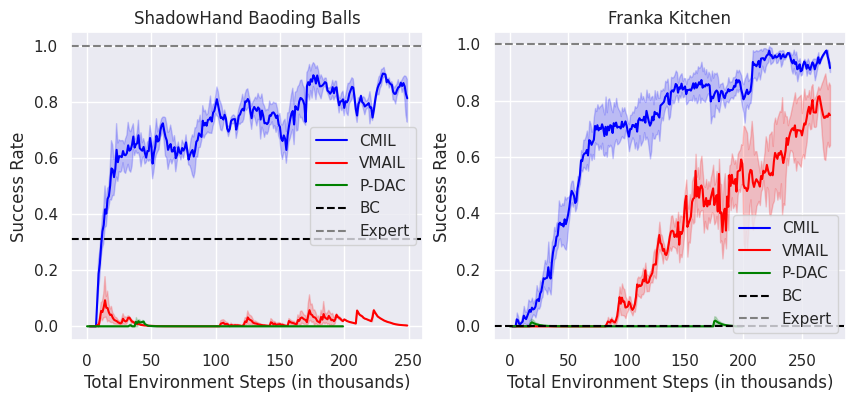} 
        \caption{Training curve of success rate of our approach, CMIL, vs. baselines. }
        \label{fig:results}
    \end{minipage}\hfill
    \begin{minipage}{0.48\textwidth}
        \centering
        \includegraphics[width=\textwidth]{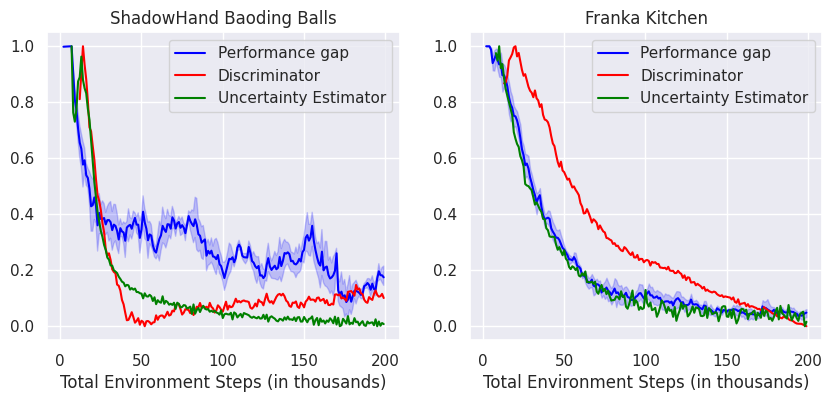} 
        \caption{Empirical estimates of elements from performance gap bound.}
        \label{fig:theory}
    \end{minipage}
\end{figure}


We note that our algorithm is the only one to solve the ShadowHand Baoding Balls environment. We postulate this is due to the higher dimensionality of the action space, which demands conservatism to constrain the search space. While baselines also solve the Kitchen environment, we note that there is a tangible difference in both stability and sample efficiency between our algorithm and baselines.  All architectural hyperparameters have been kept at default from the DreamerV2 architecture \citep{Dreamer2020Hafner}, and we only tune the Gaussian noise added to the discriminator in training ( $\sigma^2{=}2.5$ tends to work well across the board), the disagreement penalty $\alpha$ (we keep $\alpha{=}1$0) and the behaviour cloning regularization parameter ($\beta{=}10$). We note the algorithm is quite resilient to the choice of the latter two, but the former is integral to successful discriminator training and hence accurate rewards.

\paragraph{Empirical verification of theoretical results}
We aim to validate empirically the claims of Theorem \ref{thm:main_theorem} and to qualitatively evaluate the discriminator as an approximation of the reward. Theorem \ref{thm:main_theorem} states that the performance gap between the expert and the current policy is upper bounded by the sum of (i) the distribution match between the expert and the policy learned in the model, and (ii) the error of the model. We can approximate the distribution matching component as $D_\psi(\hat\rho^E_\mdp) - D_\psi(\hat\rho^\policy_{\hat\mdp})$, where $\hat\rho^\policy$ is the empirical state-action distribution. On the other hand, we approximate the model mismatch component via an admissible error estimator as in Eq. \ref{eq:model_error_estimator}, which we also calculate. We plot those two quantities (normalized min=0, max=1), as well as the oracle reward performance gap, in fig. \ref{fig:theory}. 

Noting the plots for the Franka Kitchen, we see the discriminator is a strong upper bound to the performance gap, while providing a reasonably accurate (dense) reward approximation. On the other hand, in the ShadowHand environment, we see the discriminator diverging initially and then converging back to the performance gap. In both environments, the uncertainty estimator has a similar behavior, indicating that the model mismatch is stably decreasing.

\section{Conclusion}
In this work we argue that policy optimization algorithms designed for online RL are not well suited to the IRL/AIL setting as they carry out excessive exploration and induce additional distributional shifts. We focus on the model-based case, and argue that conservative models used for offline RL are better suited to the AIL setting. We pose imitation learning as a fine-tuning problem, rather than a purely RL one, and we draw theoretical connections to offline RL and conservative fine-tuning algorithms. We provide a conservative optimization bound, as well as a practical algorithm and evaluate it on challenging manipulation tasks. The proposed algorithm achieves faster and more stable performance as compared to previous imitation learning approaches. In future work we plan to evaluate our method on further domains.

\acks{Chelsea Finn is a CIFAR Fellow in the Learning in Machines and Brains program. This work was also supported by ONR grant N00014-22-1-2621 and the Volkswagen Group.}

\bibliography{bibliography}

\end{document}